\newtheorem{theorem}{Theorem}
\newtheorem{corollary}{Corollary}
\newtheorem{definition}{Definition}
\newtheorem{proposition}{Proposition}
\newenvironment{proof}[1][Proof]{\textbf{#1.} }{\ \rule{0.5em}{0.5em} \vspace{1ex}}
\def\real{\mathbb{R}}
\newcommand{\Rexp}{\real_{\text{exp.}}}
\newcommand{\Ran}{\real_{\text{an.}}}
\newcommand{\Ranexp}{\real_{\text{an.,exp.}}}
\newcommand{\Ralg}{\real_{\text{semialg.}}}
\newcommand{\BR}{\mathbb{R}}
\newcommand{\BN}{\mathbb{N}}
\newcommand{\CS}{\mathcal{S}}
\newcommand{\rd}{\mathrm{d}}
\title{Taming Binarized Neural Networks \\ and Mixed-Integer Programs}
\author{
    Johannes Aspman\textsuperscript{\rm 1}, Georgios Korpas\textsuperscript{\rm 1,2} and
    Jakub Marecek\textsuperscript{\rm 1}
}
\begin{document}

\maketitle

\begin{abstract}
There has been a great deal of recent interest in binarized neural networks, especially because of their explainability. At the same time, automatic differentiation algorithms such as backpropagation fail for binarized neural networks, which limits their applicability. 
We show that binarized neural networks admit a tame representation
by reformulating the problem of training binarized neural networks as a subadditive dual of a mixed-integer program, which we show to have nice properties. This makes it possible to use the framework of Bolte et al. for implicit differentiation, which offers the possibility for practical implementation of backpropagation in the context of binarized neural networks. 

This approach could also be used for a broader class of mixed-integer programs, beyond the training of binarized neural networks, as encountered in symbolic approaches to AI and beyond.
\end{abstract}

\section{Introduction}
There has been a great deal of recent interest in binarized neural networks (BNNs) \cite{NIPS2016_d8330f85,courbariaux2016binarized,yuan2021comprehensive}, due to their impressive statistical performance \cite[e.g.]{rastegari2016xnor}, the ease of distributing the computation \cite[e.g.]{NIPS2016_d8330f85}, and especially their explainability. This latter property, which is rather rarely encountered in other types of neural networks, stems precisely from the binary representation of the outputs of activation functions of the network, which can be seen as logical rules. 
This explainability is increasingly mandated by regulation of artificial intelligence, including the General Data Protection Regulation and the AI Act in the European Union, and the Blueprint for an AI Bill of Rights pioneered by the Office of Science and Technology Policy of the White House. 
The training of BNNs typically utilizes the Straight-Through-Estimator (STE) \cite{courbariaux2015binaryconnect,courbariaux2016binarized,rastegari2016xnor,zhou2016dorefa,lin2017towards,bulat2017binarized,cai2017deep,xiang2017binary}, 
where the weight updates in back-propagation unfortunately \cite{Alizadeh2018AnES} do not correspond to subgradients of the forward paths. 
This can lead to poor stationary points \cite{yin2019understanding}, and thus poor explanations. 

Here, we draw a new relationship between binarized neural networks and so-called \emph{tame geometry} \cite{van1998tame} to address this challenge. 
We introduce a certain reformulation of the training of BNN, 
which allows us to make use of the results of implicit differentiation  and non-smooth optimization when training the BNNs \cite{davis2020stochastic,bolte2021conservative,bolte2021nonsmooth,Bolte2022a} 
and, eventually, to obtain 
weight updates in the back-propagation that do correspond to subgradients of the forward paths 
in common software frameworks built around automated differentiation, such as TensorFlow or PyTorch.

This builds on a long history of work on \emph{tame topology} and \emph{o-minimal structures} \cite[e.g.]{grothendieck_1997,van1998tame,kurdyka1998gradients,kurdyka2000proof,fornasiero2008definably,fornasiero2010tame,kawakami2012locally,FORNASIERO2013211,fujita2023locally}, long studied in topology, logic, and functional analysis. 

Our reformulation proceeds as follows: 
In theory, the training of BNNs can be cast as a mixed-integer program (MIP). 
We formulate its sub-additive dual, wherein we leverage the insight that conic MIPs admit a strong dual in terms of non-decreasing subadditive functions. 
We show that this dual problem is tame, or definable in an o-minimal structure.
This, in turn, makes it possible for the use of powerful methods from non-smooth optimization when training the BNN, 
such as a certain generalized derivative of \cite{bolte2021conservative}
that comes equipped with a chain rule. 
Thus, one can use backpropagation,
as usual in training of nerural networks. 

In the process, we establish a broader class of \emph{nice} MIPs that admit such a tame reformulation. 
A MIP is nice if its feasible set is compact,
and the graph of the objective function has only a finite number of non-differentiable points.
This class could be of independent interest, as it may contain a number of other problems, such as
learning causal graphs \cite{chen2021integer}, optimal decision trees \cite{nemecek2023improving,nemecek2023piecewise}, or certain problems in symbolic regression \cite{austel2020symbolic,kim2023learning}.  
We hope that this could bring closer symbolic approaches, which can often be cast as MIPs,
and approaches based on neural networks and backpropagation.

\section{Background}
Let us start by introducing the relevant background material. We begin by introducing the relevant notions of BNNs, MIPs, and their subadditive dual. We discuss how the BNN can be recast as a MIP, and thus, by strong duality, how training the BNN relates to a maximization problem over a set of subadditive functions. Our main goal is to link the BNN with tame geometry, and therefore we discuss the relevant background on o-minimal structures. Finally, we discuss results on implicit differentiation for tame functions, which offers a practical way of training the BNN once we have established its tameness.

\paragraph{Binarized Neural Networks}
There is some ambiguity in the literature as to what constitutes a binarized neural network (BNN). We will follow \cite{bah2020integer} and refer to a BNN as a neural network where the activation functions take values in the binary set $\{0,1\}$. A BNN is characterized by a vector $L = (L_0,\ldots, L_n)$ with $|L|=n$ layers where each layer contains ${L_\ell} \in \mathbb{N}_{>0}$ neurons $x_i^{(\ell)}$, see Fig. \ref{fig:bnn}. We allow the input layer $x_i^{(0)}$ to take any real values, $x_i^{(0)}\in\BR$, while due to binarized activations, the following layers will have $x_i^{(j>0)}\in\{0,1\}$. The neuron $x_i^{(\ell)}$ in the layer $\ell$ is connected with the neuron $x_{j}^{(\ell+1)}$ in the layer $\ell+1$ via a weight coefficient matrix $w_{}^{(\ell)}\in \mathbb{R}^{L_{\ell}\times L_{\ell+1}}$. Consider an input vector $\boldsymbol{x} = (x_1^{(0)},\ldots, x_{L_0}^{(0)})$. The \emph{preactivation function} of the BNN is given as
\begin{align}
    a_j^{(\ell+1)}(\boldsymbol x) =  \sum_{i \in L_{\ell+1}} w_{ij}^{(\ell)}\sigma_j^{(\ell)}(\boldsymbol x),
\end{align}
where $\sigma^{(\ell)}(\boldsymbol x)$ is the \emph{activation} function at layer $\ell$ with 
\begin{align}\label{eq:activation}
    \sigma_j^{(\ell)}(\boldsymbol x) = \begin{cases}
       \boldsymbol{x} &\text{if } \ell=0, \\
       1 &\text{if } \ell>0 \text{ and } a^{(\ell)}_j(\boldsymbol x) \geq \lambda_{\ell},\\
       0 &\text{otherwise},
    \end{cases}
\end{align}
where $\lambda_{\ell}\in\BR$ is a learnable parameter. 
Note again that the activation functions of all the neurons in the network of our BNN are constrained in the set $\{ 0,1\}$ except for the input layer
neurons. This set can be mapped to $\{-1,1\}$ by a redefinition $\tilde{\sigma}_j^{(\ell)}=2\sigma_j^{(\ell)}-1$.

\begin{figure}[!tb]
    \centering
    \includegraphics[scale=0.8]{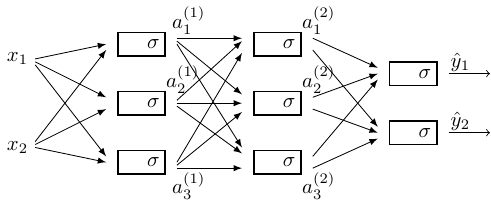}
    \caption{A BNN with $|L|=4$ layers, $L=(2,3,3,2)$. The input vector $\boldsymbol{x}=(x_1,x_2)$ take values in $\mathbb{R}^2$, while the activation functions, $\sigma$, in the following layers compress this to lie in the set $\{0,1\}^{L_\ell}$, for $\ell>0$.}
    \label{fig:bnn}
\end{figure}

The BNN can be viewed as a weight assignment $w = \{ w^{(1)},\ldots, w^{(L)} \}$ for a function
\begin{align}
    f_w : \mathbb{R}^{L_0} &\to \{0,1 \}^{L_n} \\
       \boldsymbol{x}^{(0)} &\mapsto \hat{\boldsymbol{y}},
\end{align}
where $\hat{\boldsymbol{y}} = \boldsymbol{x}^{(L_N)}$ is the vector of output layer neurons.
BNNs are trained by finding an optimal weight assignment $W$ that fits and generalizes a training set $S = \{ (\boldsymbol{x}_1,\boldsymbol{y}_1), \ldots, (\boldsymbol{x}_m, \boldsymbol{y}_m)\}$. The traditional approaches of backpropagation and gradient descent methods in usual deep learning architectures cannot be used directly for training BNNs. For optimizers to work as in standard neural network architectures, real-valued weights are required, so, in practice, when binarized weights and/or activation functions are utilized, one still uses real-valued weights for the optimization step. Another problem is related to the use of deterministic functions \eqref{eq:activation} or stochastic functions \cite{NIPS2016_d8330f85} for binarization, which ``flattens the gradient'' during backpropagation. 
A common solution to these problems is to use the Saturated STE (Straight Through Estimator) \cite{bengio2013estimating} (see also \cite{yin2019understanding}). Other possible solutions include the
Expectation BackPropagation (EBP) algorithm \cite{NIPS2014_076a0c97} which is a popular approach to training multilayer neural networks with discretized weights, and Quantized BackPropagation (QBP) \cite{hubara2018quantized}. Ref. \cite{Alizadeh2018AnES} presents a comprehensive practical survey on the training approaches for BNNs. 
In this article, we suggest that BNNs can be efficiently trained using nonsmooth implicit differentiation \cite{bolte2021nonsmooth}.




\paragraph{Mixed-Integer Programming} 
A mixed-integer linear program (MILP) is an optimization problem of the form
\begin{equation}\label{mip}
    \begin{aligned}
       \max & \quad c x+h y \\
       \rm{s.t.} & \quad A x+G y \geq b \\
                & \quad  x \in \mathbb{Z}_{\geq 0}  \\
                 & \quad  y \in \mathbb{R}_{\geq 0}.\
    \end{aligned}
\end{equation}
As illustrated in Figure~\ref{fig:mip}, the feasible set is a subset of the intersection of a polyhedron with the integral grid.

\begin{figure}[!tb]
    \centering
    \includegraphics[scale=0.65]{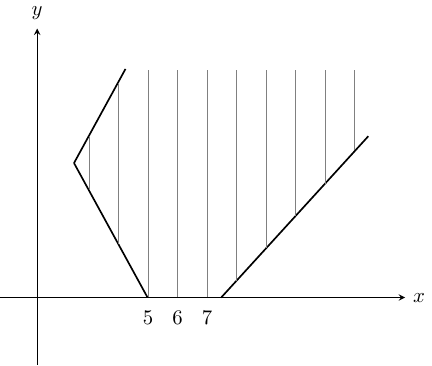}
    \caption{An illustrative example of a mixed-integer set as subset of $\mathbb{Z}\times \mathbb{R}$. This is given as the feasible set of the MIP \eqref{mip}.}
    \label{fig:mip}
\end{figure}

\paragraph{Recasting a BNN as a MIP}
The interactions and relations between BNNs and MILPs have been studied in recent literature. For example, in Ref. \cite{ToroIcarte2019} BNNs with weights restricted to $\{-1,1\}$ are trained by a hybrid method based on constraint programming and mixed-integer programming. Generally, BNNs with activation functions taking values in a binary set and with arbitrary weights can be reformulated as a MIP \cite{bah2020integer}. However, the precise form of the corresponding MIP depends on the nature of the loss function. Generally, a loss function for a BNN with $n$ layers is a map
    $\mathscr{L}:\{0,1\} \times \mathbb{R}^{L_n} \rightarrow \mathbb{R}$,
which allows the BNN to be represented as 
\begin{align}
\min & \sum_{i=1}^m \mathscr{L}\left(y_i, \hat{y}_i\right) \tag{BNN-MINLP} \label{eq:BNN} \\
\text { s.t. }  \hat{y}^i&=a^L\left(w^{(L)} a^{(L-1)}\left(\ldots a^{(1)}\left(w^{(1)} x_i\right) \ldots\right)\right) \notag \\
 w^{(\ell)} &\in \mathbb{R}^{L_\ell \times L_{\ell+1}}, \quad \forall \ell, \notag \\
 \lambda_\ell &\in \mathbb{R}, \quad \forall \ell, \notag \\ 
\hat{y}&\in \{0,1\}^m. \notag
\end{align}
The loss function $\mathscr{L}$ can be chosen in different ways; for example the 0-1 loss function $\mathscr{L}(\hat{y},y) = I_{\hat{y},y}$, where $I$ is the indicator function, or the square loss $\mathscr{L}(\hat{y},y) = \| \hat{y}-y \|^2$. The following result will then be essential for us \cite[Thm. 2]{bah2020integer}:

\begin{theorem}[MILP formulation]\label{thm:BNNMilp}
\eqref{eq:BNN} is equivalent to the following mixed-integer linear program:
\begin{align}\label{eq:linMIP}
\min & \quad \sum_{i=1}^m \mathscr{L}\left(y, u^{ (L)}\right) \tag{BNN-MILP} \\
\mathrm{s.t.}\quad w^{(1)} x^i & <M_1 u^{ (1)}+\lambda_1 \quad  \notag \\
\quad w^{(1)}  x^i &\geq M_1\left(u^{ (1)}-1\right)+\lambda_1  \notag \\ 
 \quad\sum_{l=1}^{d_{k-1}} s_l^{ (k)}& < M_k u^{ (k)}+\lambda_k, \quad \forall  k \in[L] \backslash\{1\} \notag \\ 
\quad \sum_{l=1}^{d_{k-1}} s_l^{(k)} &\geq M_k\left(u^{(k)}-1\right)+\lambda_k, \quad  \forall k \in[L] \backslash\{1\}\notag \\
\quad s_{l j}^{(k)} &\leq u_j^{ (k)}, \quad s_{l j}^{ (k)} \geq-u_j^{(k)},\notag \\
\quad \forall  k &\in[L] \backslash\{1\}, l \in\left[d_{k-1}\right], j \in\left[d_k\right]\notag \\
\quad s_{l j}^{(k)} &\leq w_{l j}^{(k)}+\left(1-u_j^{(k)}\right), \notag \\
&\quad \forall k \in[L] \backslash\{1\}, l \in\left[d_{k-1}\right], j \in\left[d_k\right] \notag \\
\quad s_{l j}^{(k)} & \geq w_{l j}^{(k)}-\left(1-u_j^{(k)}\right), \notag \\
\quad\forall  k &\in[L] \backslash\{1\}, l \in\left[d_{k-1}\right], j \in\left[d_k\right] \notag \\
 \quad W^k &\in[-1,1]^{d_k \times d_{k-1}}\quad  \forall k \in[L] \notag \\
\quad \lambda_k &\in[-1,1] \quad \forall k \in[L] \notag \\
\quad u^{i, k} &\in\{0,1\}^{d_k} \quad  \forall k \in[L], i \in[m] \notag \\
\quad s_l^{i,k} &\in [ -1,1]^{d_k} \quad \forall i \in[m], k \in[L] \backslash\{1\}, l \in\left[d_{k-1}\right], \notag
\end{align}
where $x\coloneqq x^{(0)}$, $u^{(\ell)}\coloneqq x^{(\ell)}$, for $0<\ell\leq L$, $M_1:=(n r+1)$, $\| x\|<r$ a Euclidean norm bound, $n$ the dimension of $x$, and $M_\ell:=\left(d_{\ell-1}+1\right)$. The new variables $s^{(\ell)}_{ij}\in[-1,1]$ have been added to linearize the products $w^{(\ell)}x^{(\ell-1)}$ that would otherwise appear. Finally, we have rescaled the weights $w$ and parameters $\lambda$ to lie in $[-1,1]$, without loss of generality. See also Lemma 1 in \cite{bah2020integer}.
\end{theorem}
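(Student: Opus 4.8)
The plan is to prove the equivalence by exhibiting a value-preserving correspondence between the feasible solutions of \eqref{eq:BNN} and those of \eqref{eq:linMIP}, using two standard modeling devices: a big-$M$ encoding of the threshold activation \eqref{eq:activation}, and a McCormick-type linearization of the bilinear weight--activation products. As a preliminary normalization, observe that the activation \eqref{eq:activation} depends on $w^{(\ell)}$ and $\lambda_\ell$ only through the sign of $a_j^{(\ell)}(\boldsymbol x)-\lambda_\ell$, so the rescaling $(w^{(\ell)},\lambda_\ell)\mapsto(\alpha w^{(\ell)},\alpha\lambda_\ell)$ for any $\alpha>0$ leaves $f_w$ --- and hence the objective, which sees only $\hat y$ and $y$ --- unchanged; this lets us impose $W^k\in[-1,1]^{d_k\times d_{k-1}}$ and $\lambda_k\in[-1,1]$ without loss of generality, exactly as in \eqref{eq:linMIP}.

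Next I would replace, neuron by neuron, the logical equivalence ``$u_j^{(\ell)}=1$ iff $a_j^{(\ell)}\ge\lambda_\ell$'' by the inequality pair $a_j^{(\ell)}<M_\ell u_j^{(\ell)}+\lambda_\ell$ and $a_j^{(\ell)}\ge M_\ell(u_j^{(\ell)}-1)+\lambda_\ell$, and check that the stated constants are valid two-sided bounds on $a_j^{(\ell)}$: for $\ell=1$ one has $|a_j^{(1)}|=|w^{(1)\top}x^i|$, which is bounded by $nr$ in terms of the Euclidean radius bound $\|x^i\|<r$ and the dimension $n=\dim x$ (using $\|w^{(1)}\|_\infty\le 1$), so $M_1=nr+1$ is safe; for $\ell>1$ the preceding layer is $\{0,1\}$-valued, so $|a_j^{(\ell)}|\le d_{\ell-1}$ and $M_\ell=d_{\ell-1}+1$ works. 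A case split on $u_j^{(\ell)}\in\{0,1\}$ then shows the pair of inequalities is satisfiable precisely when $u_j^{(\ell)}$ takes the value prescribed by \eqref{eq:activation}.

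Then I would linearize each product $w_{lj}^{(k)}u_j^{(k)}$ that enters $\sum_l s_l^{(k)}$ for $k>1$: introduce $s_{lj}^{(k)}$ and impose $-u_j^{(k)}\le s_{lj}^{(k)}\le u_j^{(k)}$ together with $w_{lj}^{(k)}-(1-u_j^{(k)})\le s_{lj}^{(k)}\le w_{lj}^{(k)}+(1-u_j^{(k)})$; since $w_{lj}^{(k)}\in[-1,1]$, a case check on $u_j^{(k)}\in\{0,1\}$ forces $s_{lj}^{(k)}=w_{lj}^{(k)}u_j^{(k)}$, so that $\sum_l s_l^{(k)}$ reproduces the preactivation feeding the next threshold. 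Composing these substitutions layer by layer yields a feasibility- and objective-preserving translation in both directions, which is the asserted equivalence; the remaining bookkeeping (indexing over $i\in[m]$, $k\in[L]$, and matching $u^{(L)}$ with $\hat y$) is routine and follows \cite{bah2020integer}.

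I expect the only genuinely delicate point to be the strict inequality $a_j^{(\ell)}<M_\ell u_j^{(\ell)}+\lambda_\ell$, which is not a closed constraint: one must handle the tie $a_j^{(\ell)}=\lambda_\ell$ either by introducing a small tolerance $\epsilon>0$ or by noting that it affects only a measure-zero set of parameter values and is already resolved by the convention in \eqref{eq:activation}. Keeping track of the correct, width-dependent big-$M$ constants across the deeper layers --- rather than the input-radius-dependent $M_1$ --- is the only other place where the argument needs care.
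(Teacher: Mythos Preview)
The paper does not supply its own proof of this statement: it is quoted verbatim as Theorem~2 of \cite{bah2020integer}, with the explanatory remarks about the auxiliary variables $s^{(\ell)}_{ij}$ linearizing the bilinear products and about the harmless rescaling into $[-1,1]$ absorbed into the theorem statement itself. Your sketch --- normalize by rescaling, then big-$M$ encode each threshold, then use McCormick envelopes for the weight--activation products, and compose layer by layer --- is precisely the standard argument one finds in \cite{bah2020integer}, so there is nothing to contrast.

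One small slip worth fixing: the bilinear terms you need to linearize in layer $k>1$ are products of the weights $w^{(k)}_{lj}$ with the \emph{previous} layer's binary activations $u^{(k-1)}$, not with $u_j^{(k)}$ as you wrote; the binary in the McCormick system is the incoming activation, and the resulting $\sum_l s_l^{(k)}$ is the preactivation that then determines $u^{(k)}$ via the big-$M$ pair. (The paper's own index conventions in the displayed constraints are admittedly loose on this point, so the confusion is understandable.) Your observation about the strict inequality and the measure-zero tie set is also apt and is handled in \cite{bah2020integer} exactly as you suggest.
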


This gives the first step in our aim to link the theory of tame geometry, or o-minimality, to BNNs. The next step is to look at the dual problem of this MILP. 

\paragraph{Subadditive dual}

In the context of MIPs, the notion of duality is much more involved than in convex optimization 
\cite{guzelsoy2010integer}. Only recently \cite{Kocuk2019,moran2012strong}, it is emerging that subadditive duals  \cite{Jeroslow1978,Johnson1980,Johnson1974,Johnson1979,Guzelsoy2007DualityFM,Wolsey1999-ez} can be used to establish strong duality for MIPs.
To introduce the subadditive dual, we use the modern language of \cite{moran2012strong,Kocuk2019}:

\begin{definition}[Regular cone]
A cone $K\subseteq \BR^m$ is called regular if it is closed, convex, pointed and full-dimensional. 
\end{definition}
If $x-y\in K$, we write $x\succeq_K y$ and similarily, if $x\in \text{int}(K)$ we write $x\succ_K 0$.

\begin{definition}[Subadditive and non-decreasing functions]
    A function $f:\BR^m\to\BR$ is called:\begin{itemize}
    \item \emph{subadditive} if $f(x+y)\leq f(x)+f(y)$ for all $x,y\in \BR^m$;
    \item \emph{non-decreasing} with respect to a regular cone $K\subseteq \BR^m$ if $x\succeq_K y\implies f(x)\geq f(y)$. 
    \end{itemize}
\end{definition}
The set of subadditive functions that are non-decreasing with respect to a regular cone $K \subseteq \BR^m$ is denoted $\mathcal{F}_K$ and for $f\in\mathcal{F}_K$ we further define $\bar f(x)\coloneqq \limsup_{\delta\to 0^+}\tfrac{f(\delta x)}{\delta}$. Note that this is the upper $x$-directional derivative of $f$ at zero. 

Let us start by stating the relation between subadditive functions and MIPs. To this end, we consider a generic conic MIP,
\begin{equation}\label{eq:conic_milp}
    \begin{aligned}
        z^*\coloneqq &\inf c^T x+d^Ty,\\
        \text{s.t. }&Ax+Gy\succeq_K b,\\
        &x\in\mathbb{Z}^{n_1},\\
        &y\in\BR^{n_2}.
    \end{aligned}
\end{equation}
Note that problem \eqref{eq:conic_milp} is a generalization of the primal form of a MILP, as in Thm. \ref{thm:BNNMilp}, which is recovered by setting $K=\mathbb{R}^{m}_+$. 
We define the subadditive dual problem of \eqref{eq:conic_milp} as
\begin{equation}\label{eq:subadditive}
    \begin{aligned}
        \rho^*\coloneqq &\sup f(b),\\
        \text{s.t. }&f(A^j)=-f(-A^j)=c_j,\quad j=1,\dots,n_1,\\
        &\bar{f}(G^k)=-\bar{f}(-G^k)=d_k,\quad k=1,\dots,n_2,\\
        &f(0)=0,\\
        &f\in\mathcal{F}_K, 
    \end{aligned}
\end{equation}
where $A^j$ and $G^j$ denotes the $j$'th column of the matrices $A$ and $G$, respectively, and $c_j,d_k$ are the components of the corresponding vectors from the primal MIP.

In general, the subadditive dual \eqref{eq:subadditive} is a weak dual to the primal conic
MIP \eqref{eq:conic_milp}, where any dual feasible solution provides a lower bound for the optimal value
of the primal \cite{Zlinescu2011,BenTal2001,moran2012strong}. Under the assumptions of feasibility, strong duality holds:

\begin{theorem}[Thm. 3 of \cite{Kocuk2019}]\label{thm:sufficient}
    If the primal conic MIP \eqref{eq:conic_milp} and the subadditive dual \eqref{eq:subadditive} are both feasible, then \eqref{eq:subadditive} is a strong dual of \eqref{eq:conic_milp}. Furthermore, if the primal problem is feasible, then the subadditive dual is feasible if and only if the conic dual of the continuous relaxation of \eqref{eq:conic_milp} is feasible.
\end{theorem}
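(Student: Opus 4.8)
\emph{Overall plan.} The plan is to split the statement into three pieces: weak duality ($\rho^*\le z^*$), strong duality ($\rho^*= z^*$), and the feasibility equivalence (the ``furthermore''). Strong duality I would obtain by exhibiting an explicit optimal dual solution, namely the \emph{value function} of the MIP. Weak duality I would get by a direct chain of inequalities: for any primal-feasible $(x,y)$ and dual-feasible $f$, monotonicity of $f$ with respect to $K$ gives $f(b)\le f(Ax+Gy)$, subadditivity splits this as $f(Ax+Gy)\le\sum_j f(x_jA^j)+\sum_k f(y_kG^k)$, and for the integer part $f(x_jA^j)\le x_jc_j$ follows by iterating subadditivity when $x_j\ge 0$ and from $f(-A^j)=-c_j$ when $x_j<0$; the continuous part is the same argument run through $\bar f$, using positive homogeneity and subadditivity of $\bar f$, the constraints $\bar f(G^k)=-\bar f(-G^k)=d_k$, and the elementary bound $f(tv)\le t\bar f(v)$ for $t>0$. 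Collapsing everything yields $f(b)\le c^Tx+d^Ty$, hence $\rho^*\le z^*$.

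\emph{Strong duality.} Assume both problems feasible. The mere existence of a dual-feasible $f$ already forces $z^*\ge f(b)>-\infty$ by weak duality, while primal feasibility forces $z^*<+\infty$, so $z^*\in\BR$. I would then introduce the value function
\[\phi(\beta)\coloneqq\inf\{c^Tx+d^Ty:\ Ax+Gy\succeq_K\beta,\ x\in\mathbb{Z}^{n_1},\ y\in\BR^{n_2}\},\]
so that $\phi(b)=z^*$ by definition. The verification that $\phi$ is feasible for \eqref{eq:subadditive} is a sequence of short checks: subadditivity (sum feasible pairs for $\beta_1,\beta_2$; their sum is feasible for $\beta_1+\beta_2$ since $K$ is a convex cone), monotonicity with respect to $K$ (a larger right-hand side only shrinks the feasible set), $\phi(0)=0$ (from $(0,0)$ being feasible for $\beta=0$ together with $\phi(0)\le 2\phi(0)$), $\phi(A^j)=-\phi(-A^j)=c_j$ (test points $x=\pm e_j$, $y=0$ give $\phi(\pm A^j)\le\pm c_j$, and $\phi(A^j)+\phi(-A^j)\ge\phi(0)=0$ forces equality), and $\bar\phi(G^k)=-\bar\phi(-G^k)=d_k$ (test points $x=0$, $y=\pm\delta e_k$, let $\delta\to 0^+$, and use the same sign trick on $\bar\phi$, which is itself subadditive with $\bar\phi(0)=0$). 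Then $\rho^*\ge\phi(b)=z^*$, and combined with weak duality $\rho^*=z^*$, with the supremum attained at $\phi$.

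\emph{Feasibility equivalence.} Write $K^*$ for the dual cone of $K$, and recall that the conic dual of the continuous relaxation of \eqref{eq:conic_milp} is $\sup\{b^Tv:\ A^Tv=c,\ G^Tv=d,\ v\in K^*\}$. For the easy direction, given a feasible $v$ I would set $f(\cdot)=\langle v,\cdot\rangle$: it is linear hence subadditive, $v\in K^*$ makes it non-decreasing with respect to $K$, $\bar f=f$, and the conic-dual equalities are exactly the constraints of \eqref{eq:subadditive}, so the subadditive dual is feasible. For the converse, starting from a dual-feasible $f$ I would pass to $\bar f$, which is positively homogeneous and subadditive (hence sublinear), non-decreasing with respect to $K$, and --- using that weak duality already bounds the relaxation below --- proper; a supporting-hyperplane / Hahn--Banach argument then produces $v$ in the associated convex set with $A^Tv=c$, $G^Tv=d$ and $v\in K^*$, i.e.\ a feasible point of the conic dual of the relaxation.

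\emph{Where the difficulty lies.} The routine parts are weak duality and the value-function checks. The hard part will be the converse implication in the feasibility equivalence --- extracting an honest linear functional $v\in K^*$ from an arbitrary subadditive $f$ via $\bar f$ --- which hinges on the finiteness/regularity of $\bar f$. The accompanying technical nuisance is that the value function $\phi$ may equal $+\infty$ outside the set of attainable right-hand sides, so that presenting it as a genuine element of $\mathcal{F}_K$ (which maps into $\BR$) needs either a restriction/extension step or the linear minorant $\langle v,\cdot\rangle$ supplied by the converse implication; the same mild regularity of $f$ near the origin is what underlies the bound $f(tv)\le t\bar f(v)$ used in weak duality.
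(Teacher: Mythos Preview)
The paper does not prove this theorem at all: it is quoted verbatim as ``Thm.~3 of \cite{Kocuk2019}'' and used as a black box in the proof of Theorem~\ref{th:main}. There is therefore no ``paper's own proof'' to compare against.

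That said, your sketch is essentially the standard route taken in the cited literature (Moran--Dey and Kocuk): weak duality via monotonicity/subadditivity plus the bound $f(tv)\le t\bar f(v)$, strong duality by showing the value function $\phi$ is itself dual-feasible, and the feasibility equivalence via linear functionals. Two remarks on where your outline is thinnest. First, for the value function you will need $\phi(\beta)>-\infty$ for \emph{every} $\beta$, not just $\beta=b$; this follows because the constraints of \eqref{eq:subadditive} do not involve $b$, so the same dual-feasible $f$ yields $\phi(\beta)\ge f(\beta)$ for all $\beta$, and then the $+\infty$ issue you flag can be handled by replacing $\phi$ with $\max\{\phi,\langle v,\cdot\rangle\}$ (or a similar finite minorant) once the ``furthermore'' supplies a conic-dual feasible $v$. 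Second, in the converse of the feasibility equivalence, passing to $\bar f$ immediately nails $G^Tv=d$ and $v\in K^*$ for every $v\le\bar f$, but it does \emph{not} automatically force $A^Tv=c$: from $f(\pm A^j)=\pm c_j$ you only get $\bar f(A^j)\ge c_j$ and $\bar f(-A^j)\ge -c_j$, so the interval $[-\bar f(-A^j),\bar f(A^j)]$ for $v^TA^j$ may strictly contain $c_j$. You therefore cannot take an arbitrary supporting $v$; you must run Hahn--Banach starting from the prescribed linear functional $A^j\mapsto c_j$, $G^k\mapsto d_k$ on the span of the columns, after first checking that this functional is dominated by $\bar f$ on that subspace. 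That domination check is exactly the ``finiteness/regularity of $\bar f$'' issue you anticipate, and it is where the real work in \cite{Kocuk2019} sits.
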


That is: Theorem \ref{thm:sufficient} provides a
sufficient condition for the subadditive dual to be equivalent to \eqref{eq:conic_milp}. A sufficient condition for the dual feasibility is that the conic MIP has a bounded feasible region.

\paragraph{Properties of subadditive functions}
To show our main result, Theorem \ref{th:main}, we will need to introduce some structural properties of subadditive functions. These are discussed in detail in \cite{Rosenbaum1950,Matkowski1993Subadditive,Bingham2008GenericSF}. For example, if $f,g$ are two non-decreasing subadditive functions on $\BR^m$, then the following hold:
\begin{itemize}
    \item $f+g$ is subadditive;
    \item the composition $g\circ f$ is subadditive;
    \item if further $f$ is non-negative and $g$ positive on the positive quadrant $\BR_+^m$ then $f(x)g(x)$ is subadditive on $\BR_+^m$. 
\end{itemize}
Let us note that, when we set $K=\BR_+^m$ in \eqref{eq:subadditive} we have that $f(x)$ is non-negative on $\BR_+^m$ due to the combination of being non-decreasing, subadditive and having the condition $f(0)=0$. 

Following \cite{Bingham2008GenericSF}, we define properties NT (as in ``no trumps'') and WNT (for ``weak no trumps''), see also Def. 1 and 2 of \cite{Bingham2008GenericSF}:

\begin{definition}[NT]
    For a family $(A_k)_{k\in\BN}$ of subsets of $\BR^n$ we say that $\textbf{NT}(A_k)$ holds, if for every bounded/convergent sequence $\{a_j\}$ in $\BR^n$ some $A_k$ contains a translate of a subsequence of $\{a_j\}$. 
\end{definition}

\begin{definition}[\textbf{WNT}]
Let $f\,:\,\BR^n\to \BR$. We call $f$ a \textbf{WNT}-function, or $f\in\textbf{WNT}$, if $\textbf{NT}(\{F^j\}_{j\in\BN^*})$ holds, where $F^j\coloneqq \{x\in \BR^n\,:\,|f(x)|<j\}$.
\end{definition}

We have the following theorem by Csisz\'ar and Erd\"os \cite{csiszar1964lim}, nicely explained in \cite{bingham2009beyond}: 

\begin{theorem}[NT theorem, \cite{csiszar1964lim}]
\label{thm:notrumps}
    If $T$ is an interval and $T=\bigcup_{j\in\BN^*}T_j$ with each $T_j$ measurable/Baire, then $\textbf{NT}(\{T_k\,:\, k\in\BN^*\})$ holds. 
\end{theorem}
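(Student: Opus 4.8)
The plan is to reduce the statement to a Steinhaus--Weil--type \emph{shift-compactness} property of ``large'' sets (a version of the Kestelman--Borwein--Ditor theorem): one isolates a single member $T_{j_0}$ of the countable cover that is large, and then shows that an arbitrary convergent sequence, after translating and passing to a subsequence, lands inside $T_{j_0}$. Two reductions are free. First, since every bounded sequence in Euclidean space has a convergent subsequence and a subsequence of a subsequence is again a subsequence, it suffices to treat a \emph{convergent} sequence $a_j\to a$. Second, we read ``$T$ is an interval'' as ``$T$ is a nondegenerate interval'', which is the intended hypothesis (a degenerate $T$ cannot contain a translate of a nonconstant convergent sequence); thus $T$ is non-meagre in the Baire case and has positive Lebesgue measure in the measure case. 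The case $n>1$, with ``interval'' meaning a box, is handled identically with Euclidean balls in place of intervals.

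\textbf{Step 1 (a large piece).} Write $T=\bigcup_{j\in\mathbb{N}^*}T_j$. Since this is a countable union, the Baire category theorem --- respectively, countable subadditivity of Lebesgue measure --- furnishes an index $j_0$ with $T_{j_0}$ non-meagre --- respectively, of positive measure. Using that $T_{j_0}$ has the Baire property --- respectively, is measurable --- we may strengthen this to: $T_{j_0}\supseteq I\setminus M$ for some nonempty open interval $I$ and some meagre set $M$ (Baire case); respectively, $T_{j_0}$ has a Lebesgue density point $x_0$ (measure case).

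\textbf{Step 2 (shift-compactness of $T_{j_0}$).} Set $z_j:=a_j-a$, so $z_j\to 0$; we seek a point $s$ and an infinite set $S\subseteq\mathbb{N}$ with $s+z_j\in T_{j_0}$ for all $j\in S$. In the Baire case, $M':=M\cup\bigcup_{j}(M-z_j)$ is meagre, so for a slightly shrunk closed subinterval $I'\subset I$ the set $I'\setminus M'$ is non-empty; any $s$ in it has $s+z_j\in I$ for all large $j$ (because $z_j\to 0$) and $s+z_j\notin M$ (because $s\notin M-z_j$), hence $s+z_j\in I\setminus M\subseteq T_{j_0}$ for all large $j$ --- a full tail works. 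In the measure case we argue by contradiction: if only a Lebesgue-null set of $s\in T_{j_0}$ had the property, then the non-admissible points would cover $T_{j_0}$ up to a null set, so, writing them as $\bigcup_{N}B_N$ with $B_N:=\{x\in T_{j_0}:x+z_j\notin T_{j_0}\ \text{for all}\ j\ge N\}$, some $B_{N_0}$ has positive measure and hence a density point $x_1$; as $B_{N_0}\subseteq T_{j_0}$, $x_1$ is also a density point of $T_{j_0}$, yet $B_{N_0}+z_j\subseteq T_{j_0}^{c}$ for $j\ge N_0$ forces $T_{j_0}^{c}$ to fill a fixed positive fraction of arbitrarily short intervals about $x_1$ (translate a short interval around $x_1$ by a $z_j$ of small modulus), contradicting the density-point property. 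Hence an admissible $s$ exists.

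\textbf{Step 3 (conclusion) and the main obstacle.} Given $s,S$ as above, set $t:=s-a$; then $t+a_j=s+z_j\in T_{j_0}$ for every $j\in S$, so $T_{j_0}$ contains the translate by $t$ of the subsequence $(a_j)_{j\in S}$. Since $(a_j)$ was arbitrary, $\mathbf{NT}(\{T_k:k\in\mathbb{N}^*\})$ holds. The only non-routine part is the measure version of Step 2: one needs a \emph{single} base point that works for \emph{infinitely many} $j$ simultaneously, so the easy remark ``for each fixed $j$, almost every $s$ satisfies $s+z_j\in T_{j_0}$'' does not suffice, and the Lebesgue-density contradiction is precisely what supplies this uniformity. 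The Baire version is comparatively soft because meagre sets form a $\sigma$-ideal, which is why there one even obtains a full tail rather than merely an infinite subsequence.
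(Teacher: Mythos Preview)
Your proof is correct and follows the standard Kestelman--Borwein--Ditor shift-compactness argument. Note, however, that the paper does not supply its own proof of this statement: Theorem~\ref{thm:notrumps} is quoted from \cite{csiszar1964lim} (with the remark that it is ``nicely explained in \cite{bingham2009beyond}'') and is used as an external input. So there is no in-paper argument to compare against; what you have written is essentially the proof one finds in the Bingham--Ostaszewski line of references, including the observation that in the Baire case one obtains a full tail while in the measure case one only secures an infinite subsequence via the density-point contradiction.
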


Here, Baire refers to the functions having ``the Baire property'', or the set being open modulo some meager set. Note that this is not necessarily related to being definably Baire as in Def. \ref{def:baire2}. 

The following properties are shown in \cite{Bingham2008GenericSF}:
\begin{itemize}
    \item If $f$ is subadditive and locally bounded above at a point, then it is locally bounded at every point. 
    \item If $f\in \textbf{WNT}$ is subadditive, then it is locally bounded. 
    \item If $f\in \textbf{WNT}$ is subadditive and $\inf_{t<0}f(tx)/t$ is finite for all $x$, then $f$ is Lipschitz. 
\end{itemize}

\paragraph{Tame topology}\label{sec:tame_top}
The subject of tame topology goes back to Grothendieck and his famous ``Esquisse d'un programme'' \cite{grothendieck_1997}. Grothendieck claimed that modern topology was riddled with false problems, which he ascribed to the fact that much of modern progress had been made by analysts. What he proposed was the invention of a geometers version of topology, lacking these artificial problems from the onset. Subsequently, tame topology has been linked to model-theoretic notions of o-minimal structures, which promise to be good candidates for Grothendieck's dream. 
o-minimal structures are a generalisation of the (semi-)algebraic sets, or the sets of polynomial equations (and inequalities). As such, they provide us with a large class of sets and functions that are in general non-smooth and non-convex, while capturing most (if not all) of the popular settings used in modern neural networks and machine learning \cite{davis2020stochastic}. 

An o-minimal structure over $\real$ is a collection of subsets of $\real^m$ that satisfies certain finiteness properties, such as closure under boolean operations, closure under projections and fibrations. Formally,
\begin{definition}[o-minimal structure]
	    An o-minimal structure on $\BR$ is a sequence $\CS=(\CS_m)_{m\in\BN}$ such that for each $m\geq 1$:
	    \begin{enumerate}\setlength\itemsep{0.5em}
	        \item[{1)}] $\CS_m$ is a boolean algebra of subsets of $\BR^m$;
	        \item[{2)}] if $A\in\CS_m$, then $\BR\times A$ and $A\times \BR$ belongs to $\CS_{m+1}$;
	        \item[{3)}] $\CS_m$ contains all diagonals, for example $\{(x_1,\dots,x_m)\in \BR^m\,:\, x_1=x_m\}\in \CS_m$;
	        \item[{4)}] if $A\in\CS_{m+1}$, then $\pi(A)\in\CS_m$;
         \item[{5)}] the sets in $\CS_1$ are exactly the finite unions of intervals and points. 
	    \end{enumerate}
	\end{definition}
Typically, we refer to a set included in an o-minimal structure as being \emph{definable} in that structure, and similarly, a function, $f:\real^m\to\real^n$, is called definable in an o-minimal structure whenever its corresponding graph, $\Gamma(f)=\{(x,y)\,|\, f(x)=y\}\subseteq \real^{m\times n}$, is definable. A set, or function, is called \emph{tame} to indicate that it is definable in some o-minimal structure, without specific reference to which structure.  

The moderate sounding definition of o-minimal structures turns out to include many non-trivial examples. First of all, by construction, semialgebraic sets form an o-minimal structure, denoted $\Ralg$. If this was the only example of an o-minimal structure, it would not have been a very interesting construction. The research in o-minimal structures really took off in the middle of the nineties, after Wilkie \cite{Wilkie1996ModelCR} proved that we can add the graph of the real exponential function, $x\mapsto e^x$, to $\Ralg$ to again find an o-minimal structure, denoted $\Rexp$. As a result, the sigmoid function, which is a prevalent activation function in numerous neural networks, can be considered tame. Another important structure is found by including the set of restricted real-analytic functions, where the domain of an analytic function is restricted to lie in a finite subset of the original domain in a particular way. This gives rise to an o-minimal structure denoted $\Ran$ \cite{vdDRanexp}. A classical example of this would be the function $\sin(x)$, where we restrict $x$ to lie in a finite interval $x\in [0,\alpha]\subset \real$ for some $\alpha<\infty$. Note that without this restriction on the argument, $\sin(x)$ is not tame. Furthermore, we can construct a very important o-minimal structure by combining $\Rexp$ with $\Ran$. This gives the structure denoted $\Ranexp$ \cite{vdDRanexp}. It is important to note that the fact that $\Ranexp$ is an o-minimal structure is a non-trivial result. In general, it does not hold that the combination of two o-minimal structures gives another o-minimal structure. 

We thus see that o-minimal structures capture a very large class of, generally, non-smooth non-convex functions. More importantly, they include all classes of functions widely used in modern machine learning applications. The great benefit of this class is that they are still \emph{nice} enough such that we can have some control over their behaviour and prove convergence to optimal points \cite{Bolte2009Semismooth,davis2020stochastic,bolte2021conservative,aravanis2022polynomial,josz2023global}.

Perhaps the most fundamental results regarding o-minimal structures are the monotonicity and cell decomposition theorems. The former states that any tame function of one variable can be divided into a \emph{finite} union of open intervals, and points, such that it is continuous and either constant or strictly monotone on each interval. The cell decomposition theorem generalizes this to higher dimensions by introducing the concept of a cell, which is the analogue of the interval or point in one dimension. The theorem then states that any tame function or set can be decomposed into a finite union of definable cells. A related notion is that of a stratification of a set. Generally, a stratification is a way of partitioning a set into a collection of submanifolds called strata. There exist many different types of stratifications, characterized by how the different strata are joined together. Two important such conditions are given by the Whitney and Verdier stratifications. Both of these are applicable to tame sets \cite{loi1996whitney,le1998verdier}. These results are at the core of many of the strong results on tame functions in non-smooth optimization.


\paragraph{Locally o-minimal structures}
There exists a few variants of weakenings of the o-minimal structures. One such example is what is called a \emph{locally o-minimal structure} \cite{fornasiero2008definably,fornasiero2010tame,kawakami2012locally,FORNASIERO2013211,fujita2023locally}.

\begin{definition}[Locally o-minimal structure]
    A definably complete structure $\mathbb{K}$ extending an ordered field is \emph{locally o-minimal} if, for every definable function $f\,:\,\mathbb{K}\to\mathbb{K}$, the sign of $f$ is eventually constant. 
\end{definition}
Here, definably complete means that every definable subset of $\mathbb{K}$ has a supremum in $\mathbb{K}\sqcup \{\pm\infty\}$ and $X\subseteq\mathbb{K}$ is nowhere dense if $\text{Int}(\bar X)$ is empty. Every o-minimal expansion of an ordered field is a definably complete structure (but the converse is not true). Note also that every o-minimal structure is locally o-minimal \cite{fornasiero2010tame}. Locally o-minimal structures satisfy a property called \emph{definably Baire}:
\begin{definition}[Definably Baire, from \cite{FORNASIERO2013211}]\label{def:baire2}
    A definably complete structure $\mathbb{K}$ expanding an ordered field is \emph{definably Baire} if $\mathbb{K}$ is not the union of a definable increasing family of nowhere dense subsets. 
\end{definition}

Finally, when we work with structures expanding $(\BR,+,\cdot,<)$ we have that local o-minimality implies o-minimality \cite{kawakami2012locally}, while the same is generically not true when we do not have multiplication.

\paragraph{Non-smooth differentiation}\label{sec:conservative} Bolte et al., \cite{bolte2021conservative}, introduced a generalized derivative, called a conservative set-valued field, for non-smooth functions. The main idea behind this construction is that the conservative fields come equipped with a chain rule. Namely, given a locally Lipschitz function $f:\,\BR^m\to\BR$, we say that $D:\,\BR^m\rightrightarrows \BR^m$ is a conservative field for $f$ if and only if the function $t\mapsto f(x(t))$ satisfies 
\begin{equation}
    \frac{\rd }{\rd t}f(x(t))=\langle v,\dot x(t)\rangle,\quad \forall v\in D(x(t)),
\end{equation}
for any absolutely continuous curve $x:\,[0,1]\to\BR^m$ and for almost all $t\in[0,1]$.
Having a chain rule is key for applications to backpropagation algorithms and automatic differentiation in machine learning. 

Automatic differentiation for non-smooth elementary functions is subtle and even the well-known Clarke generalized gradient is known to introduce complications in this setting. Having a derivative flexible enough to include automatic differentiation was therefore indeed the main motivation behind the work of Bolte et al. In many ways, we can see the conservative fields as a generalization of the Clarke derivatives. 

The conservative fields provide a flexible calculus for non-smooth differentiation that is applicable to many machine learning situations. In \cite{bolte2021nonsmooth}, a non-smooth implicit differentiation using the conservative Jacobians is developed. This can be seen as a form of automatic subdifferentiation (backpropagation). The automatic subdifferentiation is an automated application of the chain rule, made available through the use of the conservative fields. It amounts to calculating the conservative Jacobians of the underlying functions. This "conservative subgradient descent" is given by picking an initial value for the parameters, captured by a vector $v_0$ followed by performing the following update in steps
\begin{equation}
    \begin{aligned}
        v_{k+1}=v_k+\alpha_kg_k,\\
        g_k\in J(v_k),
    \end{aligned}
\end{equation}
with $(\alpha_k)_{k\in\BN}$ a sequences of step-sizes and $J(v_k)$ the conservative Jacobian \cite{bolte2021nonsmooth}. 

This gives a formal mathematical model for propagating derivatives which can be applied to guarantee local convergence of mini-batch stochastic gradient descent with backpropagation for a large number of machine learning problems. In particular, and of great importance for us, these results hold for locally Lipschitz tame functions.

Next, we will show that the subadditive dual of the MIP formulation of the BNN \eqref{eq:BNN} is locally Lipschitz and tame. This will allow us to use the machinery of \cite{bolte2021conservative,bolte2021nonsmooth} discussed above when training the BNN.

\section{Main Result}
We will now present the main result of the paper. To do so, we will restrict to a certain subset of conic MIPs, which we call \emph{nice}: 

\begin{definition}\label{def:nice}
Let us consider a conic MIP \eqref{eq:conic_milp}. 
    Under the following conditions:
\begin{enumerate}
\item[(A)] the conic MIP is feasible, 
\item[(B)] the conic dual of the continuous relaxation of \eqref{eq:conic_milp} is feasible, 
\item[(C)] the graph of the objective function has a finite number of non-differentiable points,
\end{enumerate}
we call the conic MIP \emph{nice}. 
\end{definition}

For example:

\begin{proposition}
The conic MIP of Theorem \ref{thm:BNNMilp} is nice.
\label{thm:main2}
\end{proposition}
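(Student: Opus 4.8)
The plan is to verify the three conditions of Definition~\ref{def:nice} for the MILP \eqref{eq:linMIP} one at a time; the combinatorial content of the Bah--Martin reformulation enters only through conditions (A) and (B), while (C) reduces to a remark on the admissible loss functions $\mathscr L$. For condition (A) I would not bother with the equivalence to \eqref{eq:BNN} but simply exhibit a feasible point: set every weight matrix $w^{(k)}=0$, every threshold $\lambda_k=0$, every activation vector $u^{i,k}$ equal to the all-ones vector $\mathbf 1$, and every linearization variable $s^{i,k}_l=0$. Then the first two big-$M$ inequalities become $0<M_1\mathbf 1$ and $0\ge 0$; the layerwise big-$M$ inequalities become $0<M_k\mathbf 1$ and $0\ge 0$; and the four coupling families relating $s$, $u$ and $w$ collapse to $0\le 1$, $0\ge -1$, $0\le 0$ and $0\ge 0$. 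All of these hold because $M_1=nr+1>0$ and $M_k=d_{k-1}+1>0$, and the box constraints on $w,\lambda,u,s$ are trivially satisfied at this point, so \eqref{eq:linMIP} is feasible. (Equivalently, feasibility is immediate from Theorem~\ref{thm:BNNMilp}, since \eqref{eq:BNN} is feasible for any weight assignment propagated forward.)

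For condition (B) I would invoke the sufficient condition stated after Theorem~\ref{thm:sufficient}: it is enough to verify that the feasible region of \eqref{eq:linMIP} is bounded, as this forces feasibility of the conic dual of its continuous relaxation. Boundedness is immediate from the variable ranges appearing in \eqref{eq:linMIP}: $w^{(k)}\in[-1,1]^{d_k\times d_{k-1}}$, $\lambda_k\in[-1,1]$, $u^{i,k}\in\{0,1\}^{d_k}\subseteq[0,1]^{d_k}$ and $s^{i,k}_l\in[-1,1]^{d_k}$, so the feasible set is contained in a product of boxes. (Replacing the strict inequalities by their non-strict closures in the usual way, the feasible set is moreover compact, which matches the informal description of \emph{nice} MIPs in the introduction.) Thus (B) holds, and together with (A) and Theorem~\ref{thm:sufficient} this also shows that the subadditive dual of \eqref{eq:linMIP} is a strong dual — a fact we exploit in Theorem~\ref{th:main}.

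For condition (C) I would observe that for the standard choices the objective $(w,\lambda,u,s)\mapsto\sum_{i=1}^m\mathscr L(y_i,u^{i,(L)})$ is smooth. For the square loss $\mathscr L(\hat y,y)=\|\hat y-y\|^2$ it is a polynomial in $u^{(L)}$ and constant in the remaining variables, hence $C^\infty$, so its graph has no non-differentiable points at all; for the $0$--$1$ loss one has $\mathscr L(\hat y,y)=\hat y(1-2y)+y$ on $\hat y\in\{0,1\}$, so the objective is affine, and the natural piecewise-affine extension to $\hat y\in[0,1]$ has only finitely many breakpoints. In either case (C) is satisfied, and so \eqref{eq:linMIP} is nice.

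The step I expect to be the only delicate one is (C), not because it is hard but because its statement is sensitive to how a discrete loss is extended off the integer points: one must fix the convention — a smooth surrogate, or a piecewise-affine interpolant with finitely many kinks — and check that the resulting objective has finitely many non-differentiable points. Conditions (A) and (B), by contrast, are essentially bookkeeping, since the encoding \eqref{eq:linMIP} is by construction feasible and confined to a box, which is exactly what niceness demands.
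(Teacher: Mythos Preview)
Your proof is correct and follows the same route as the paper: boundedness (indeed compactness) of the feasible region handles conditions (A) and (B) via the sufficient condition after Theorem~\ref{thm:sufficient}, and the structure of the loss function handles (C). You are more explicit than the paper in that you exhibit a concrete feasible point for (A) and treat the square and $0$--$1$ losses separately for (C), whereas the paper simply asserts compactness of the feasible set and invokes Theorem~5 of \cite{Kocuk2019} for (B), and dispatches (C) in one line by noting the objective is a finite sum of loss functions.
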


\begin{proof}
    The feasible set is a product of $\{0, 1\}^{L_nm}
$ and the set $S$. For any value in $\{0, 1\}^{L_nm}$, we obtain a finite value within $S$.
    The feasible set is then compact.  
    Theorem 5 of \cite{Kocuk2019} then tells us that condition (B) of Definition \ref{def:nice} is satisfied. 
The objective function is a finite sum of loss functions for the original BNN, and as such it has a finite number of non-differentiable points, satisfying condition (C). 
\end{proof}

\begin{theorem}\label{th:main}
For a \emph{nice} conic MIP  \eqref{eq:conic_milp}, there exists an equivalent reformulation that is definable in an o-minimal structure.
\label{thm:main}
\end{theorem}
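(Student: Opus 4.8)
The plan is to take the subadditive dual \eqref{eq:subadditive} of the conic MIP as the promised equivalent reformulation, and then to prove that this reformulation is semialgebraic — hence definable in the o-minimal structure $\Ralg$ — by identifying its optimum with a concrete, finitely generated function and controlling that function on a bounded domain.

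\textbf{Step 1 (strong duality gives the reformulation).} Hypothesis (A) makes the primal \eqref{eq:conic_milp} feasible and hypothesis (B) makes the conic dual of its continuous relaxation feasible; by Theorem \ref{thm:sufficient} the subadditive dual \eqref{eq:subadditive} is then feasible and is a strong dual, so $\rho^*=z^*$. Hence \eqref{eq:subadditive} is an equivalent reformulation in the sense of equal optimal value, and it remains only to present it in definable form.

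\textbf{Step 2 (collapse the function space to the value function).} The supremum in \eqref{eq:subadditive} is attained at the value function of the MIP, $z(\beta)\coloneqq\inf\{\, c^{T}x+d^{T}y \,:\, Ax+Gy\succeq_K\beta,\ x\in\mathbb{Z}^{n_1},\ y\in\mathbb{R}^{n_2}\,\}$, which lies in $\mathcal{F}_K$ and satisfies the column constraints of \eqref{eq:subadditive}; for $K=\mathbb{R}^m_{+}$ (the case of Theorem \ref{thm:BNNMilp}) this is classical value-function / Gomory-function theory, and in the conic case it follows from the same construction together with Theorem \ref{thm:sufficient}. It therefore suffices to show that $z(\cdot)$, restricted to the bounded set of right-hand sides that are relevant because the feasible region of \eqref{eq:conic_milp} is compact, is tame and locally Lipschitz.

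\textbf{Step 3 (tameness of the value function).} The value function is subadditive and non-decreasing with respect to $K$, and on a compact domain it is bounded; by the structural results on subadditive functions recalled above — local boundedness above at one point propagates to local boundedness everywhere, and a subadditive $\textbf{WNT}$ function with $\inf_{t<0}z(tx)/t$ finite is Lipschitz, the $\textbf{WNT}$ property being furnished on the measurable level sets $\{\, x : |z(x)|<j \,\}$ by the no-trumps Theorem \ref{thm:notrumps} — the value function is locally Lipschitz on that domain. Hypothesis (C) gives it only finitely many non-differentiable points; since each smooth piece inherits the linear data $A,G,c,d$ through the finitely many round-up operations of a Gomory function (which are locally constant off a finite set), $z$ is piecewise affine with finitely many pieces on the compact domain, and a piecewise-affine function with finitely many pieces is semialgebraic, hence definable in $\Ralg$. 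Substituting $z(\cdot)$ into \eqref{eq:subadditive} turns the dual into an evaluation of a semialgebraic function subject to finitely many polynomial conditions on the data — all definable in the o-minimal structure $\Ralg$ — so the reformulation is tame and locally Lipschitz, exactly the form required by \cite{bolte2021conservative,bolte2021nonsmooth}.

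\textbf{Main obstacle.} The delicate point is Step 3: the round-up operations that generate MIP value functions are \emph{not} tame on all of $\mathbb{R}$, so the argument genuinely depends on reducing to a bounded domain — i.e.\ on exploiting compactness of the feasible region, as is available for the BNN MILP through Proposition \ref{thm:main2} — and then on verifying that the resulting piecewise structure is truly finite, for which condition (C) is the essential input. A secondary subtlety is making the generator / value-function argument of Step 2 rigorous in the conic rather than polyhedral setting; for $K=\mathbb{R}^m_{+}$, which is what the BNN application needs, this is standard.
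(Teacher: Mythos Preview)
Your route differs from the paper's: the paper does not pass through the value function at all, nor does it argue semialgebraicity. Instead it argues that the subadditive dual, viewed through the No-Trumps theorem applied to non-decreasing subadditive $f$, sits in a \emph{locally} o-minimal structure, and then invokes \cite[Remark~22]{kawakami2012locally} (compact subsets of locally o-minimal structures are o-minimal) together with boundedness of the feasible region to upgrade this to genuine o-minimality. So where you try to land directly in $\Ralg$ via a finite piecewise-affine description of $z(\cdot)$, the paper lands in an unspecified o-minimal structure via the local-o-minimal $\to$ o-minimal passage under compactness.

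There is, however, a genuine gap in your Step~3. Condition~(C) in Definition~\ref{def:nice} concerns the \emph{primal objective function} --- in \eqref{eq:conic_milp} the map $(x,y)\mapsto c^Tx+d^Ty$, or in the BNN case the loss $\sum_i\mathscr{L}(y_i,\hat y_i)$ --- and says nothing whatsoever about the value function $\beta\mapsto z(\beta)$. Your sentence ``Hypothesis~(C) gives it only finitely many non-differentiable points'' therefore does not follow: the non-differentiability set of $z(\cdot)$ is governed by the integer constraints and the right-hand-side geometry, not by the smoothness of the primal objective. You then try to recover finiteness of pieces from Gomory/Chv\'atal theory, but that is an independent argument (valid only for $K=\mathbb{R}^m_+$, as you note), and once you rely on it the appeal to~(C) is doing no work. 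In short, (C) is being invoked for a conclusion it cannot deliver, and the remaining Gomory-function justification would need to be made self-contained --- including why the number of round-up operations, and hence of affine pieces, is finite on the relevant compact set of right-hand sides --- before the semialgebraicity claim stands.

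A secondary point: even if $z(\cdot)$ is shown to be semialgebraic, you should say explicitly what the ``equivalent reformulation that is definable'' actually \emph{is}. Evaluating $z$ at the fixed $b$ is a number, not an optimization problem; the paper's intended object is the subadditive dual itself, restricted so that it becomes definable.
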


\begin{proof}
Let us consider the subadditive dual \eqref{eq:subadditive} of the nice conic MIP \eqref{eq:conic_milp}. 
When the conic dual of the continuous relaxation is feasible, this dual is equivalent by Theorem \ref{thm:sufficient}.
Furthermore, this dual is locally o-minimal by considering the No-Trumps theorem (Theorem \ref{thm:notrumps}) together with the fact that $f(x)$ is non-decreasing and subadditive.
By \cite[Remark 22]{kawakami2012locally}, a compact subset of a locally o-minimal
structure is o-minimal. 
When we consider that the continuous relaxation of the mixed-integer set is bounded (cf.\ Property (B) of Definition \ref{def:nice} together with Thm. 5 of \cite{Kocuk2019}), we thus obtain o-minimality.
\end{proof}

\begin{corollary}
Training BNNs allows for implicit differentiation and chain rule. 
\end{corollary}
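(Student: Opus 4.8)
The plan is to chain together the results already established. First I would recall that, by Theorem~\ref{thm:BNNMilp}, training a BNN in the sense of \eqref{eq:BNN} is equivalent to the mixed-integer linear program \eqref{eq:linMIP}, which is the special case $K=\BR^m_+$ of the generic conic MIP \eqref{eq:conic_milp}. By Proposition~\ref{thm:main2}, this conic MIP is \emph{nice} in the sense of Definition~\ref{def:nice}. Theorem~\ref{th:main} then furnishes an equivalent reformulation --- concretely, the subadditive dual \eqref{eq:subadditive}, whose equivalence to \eqref{eq:conic_milp} is guaranteed by Theorem~\ref{thm:sufficient} --- that is definable in an o-minimal structure, hence tame.

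The remaining ingredient is that this tame reformulation is \emph{locally Lipschitz}, since the conservative calculus of \cite{bolte2021conservative} and the nonsmooth implicit differentiation of \cite{bolte2021nonsmooth} are stated for locally Lipschitz tame functions. For this I would invoke the structural properties of subadditive functions recalled above: the objective of \eqref{eq:subadditive} is a non-decreasing subadditive function $f$ with $f(0)=0$, so on $\BR^m_+$ it is non-negative; the No-Trumps theorem (Theorem~\ref{thm:notrumps}), applied to a decomposition of an interval by the measurable sublevel sets $F^j=\{x:|f(x)|<j\}$, yields $f\in\textbf{WNT}$; and the dual constraints $\bar f(G^k)=-\bar f(-G^k)=d_k$ together with the boundedness of the continuous relaxation (Property (B) of Definition~\ref{def:nice} and Thm.~5 of \cite{Kocuk2019}) control the one-sided directional derivatives at $0$, so that $\inf_{t<0}f(tx)/t$ is finite for every $x$. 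By the last of the properties of \cite{Bingham2008GenericSF} quoted above, $f$ is then Lipschitz, in particular locally Lipschitz.

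With tameness and local Lipschitzness in hand, \cite{bolte2021conservative} supplies a conservative set-valued field for the reformulated training objective, and conservative fields come equipped with a chain rule along absolutely continuous curves; composing the layerwise conservative Jacobians as in \cite{bolte2021nonsmooth} then yields a valid automatic subdifferentiation, i.e.\ a backpropagation whose weight updates are genuine (conservative) subgradients of the forward pass. The main obstacle I anticipate is precisely the local-Lipschitz verification: tameness is handed to us by Theorem~\ref{th:main}, but checking that the $\bar f$-constraints and compactness really force $\inf_{t<0}f(tx)/t>-\infty$ --- rather than merely local boundedness of $f$ --- is the delicate point, and one should also be explicit about whether the object being differentiated is the optimal value of \eqref{eq:subadditive} as a function of the training data $(x_i,y_i)$ or the induced map on the weights $w$, since the chain rule of \cite{bolte2021conservative} must be threaded through that dependence.
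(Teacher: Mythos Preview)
Your proposal is correct and follows essentially the same route as the paper's own proof: chain Proposition~\ref{thm:main2} and Theorem~\ref{th:main} to obtain a tame reformulation, then invoke the Lipschitz criterion of \cite{Bingham2008GenericSF} (subadditive $f\in\textbf{WNT}$ with $\inf_{t<0}f(tx)/t$ finite) so that the conservative-field machinery of \cite{bolte2021conservative,bolte2021nonsmooth} applies. The paper's argument is in fact terser than yours --- it simply cites the Bingham result without spelling out the \textbf{WNT} verification via Theorem~\ref{thm:notrumps} or the directional-derivative bound from the dual constraints --- so your added detail and the caveats you flag at the end are, if anything, improvements in rigor rather than departures in strategy.
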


\begin{proof}
This follows from Proposition \ref{thm:main2} and Theorem \ref{thm:main} together with the work of \cite{bolte2021conservative, bolte2021nonsmooth} discussed above, when one realizes that the subadditive dual is locally Lipschitz.
Lipschitzianity is from \cite{Bingham2008GenericSF}:
If $f \in \textbf{WNT}$ is subadditive and $\inf_{t<0}f(tx)/t$ is finite for all $x$, then $f$ is Lipschitz. 
\end{proof}

This corollary thus provides us with a practical way of training the BNNs, by utilizing the results of \cite{bolte2021conservative,bolte2021nonsmooth} to optimize over the subadditive dual of the corresponding MILP.

Let us finally note that, in general, non-decreasing subadditive functions are not tame. A counterexample is given by the Cantor staircase function \cite{dobovs1996standard}. This means that in general, the subadditive dual of a conic MIP need not fall under the tame setting and some additional property (``constraint qualification'') is necesseary for our main result.

\section{An Example}
\label{sec:example1} 

To make the above discussion more clear, we present a simple example outlining how the training of a BNN could make use of the implicit differentiation  \cite{bolte2021nonsmooth}. To this end, we consider three final layers of a BNN inspired by Example 1 of \cite{Guzelsoy2007DualityFM},
illustrated in Fig. \ref{fig:ralphs_ExampleBNN}, 
where there are a number of binary weights given by the final layer of \eqref{eq:linMIP}, $u_i^{(L)}$, $i=1,\dots,n$, to be learned. Here, of course $u^{(L)}=\hat{\boldsymbol{y}}=\boldsymbol{x}^{(L_N)}=a^{(L)}(w^{(L)}a^{(L-1)}(\dots a^{(1)}(w^{(1)}x)\dots))$. We split this vector into two, by introducing an $m\in\mathbb{N}$ such that $1<m<n$. The pen-ultimate two layers yield a bi-variate continuous-valued output layer $(Y_1, Y_2)$. Instead of the usual empirical risk, we consider an objective function involving weighted difference from values of the dependent variable in the training data (assumed to be zero), as well as one of the weights in the pen-ultimate layer, for the sake of a more interesting illustration:  
\begin{equation}\label{eq:ralphsPrimal}
    \begin{aligned}
        \min_{X,Y} \; & 2(Y_1 - 0) + (Y_2 - 0) + \tfrac12 X_1,\\
        \text{s.t. } &X_1-\tfrac32 X_2+Y_1-Y_2=b,\\
        & X_1=\sum_{i=1}^m u_i^{(L)}, \\
        & X_2=\sum_{i=m+1}^n u_i^{(L)}, \\
        &
        u_i^{(L)} \in \{ 0, 1\}, 
        \,
        X_1,X_2\in \mathbb{Z}_+,\, Y_1,Y_2\in\mathbb{R}_+.
    \end{aligned}
\end{equation}
We note that our theory does cover the case of the usual empirical risk with square-loss function, but the illustrations would be more involved due to the non-linearity in the square loss.

\begin{figure}[!tbh]
    \centering
    \includegraphics[scale=0.7]{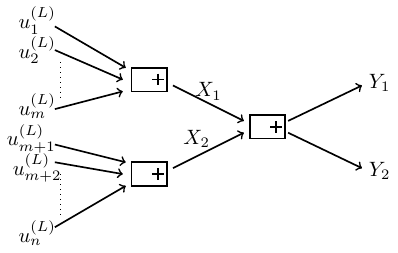}
    \caption{A graphical representation of the three final layers of the BNN we use as an example. See  \eqref{eq:ralphsPrimal} for the corresponding MILP.}
    \label{fig:ralphs_ExampleBNN}
\end{figure}

Following the definition 
\eqref{eq:subadditive}, the subadditive dual of \eqref{eq:ralphsPrimal} is:
\begin{equation}\label{eq:ralphsSubadditive}
    \begin{aligned}
        \max_{f\in \mathcal{F}_{\mathbb{R}_+}}\, &f(b),\\
        \text{s.t. } f(1)&\leq \tfrac12,\\
        f(-\tfrac32)&\leq 0,\\
        \bar f(1)&\leq 2,\\
        \bar f(-1)&\leq 1,\\
        f(0)&=0.
    \end{aligned}
\end{equation}
The subadditive dual problem is obviously an infinite-dimensional optimization problem over the whole space of subadditive functions $ \mathcal{F}_{\mathbb{R}_+}$. However, as shown in \cite{schrijver1980cutting}, the subadditive dual functions of MILPs are Chv{\' a}tal functions, i.e., piecewise-linear. We can thus utilize this knowledge to finitely parametrize the space of relevant subadditive functions by the number of segments, slopes, and breakpoints of  piecewise-linear subadditive functions. When we consider nice MILPs as in \eqref{def:nice}, we thus obtain a finite-dimensional problem. It is furthermore evident that we can approximate this problem by truncating in the number of segments of the piecewise-linear subadditive functions. 

For the above example, we start with approximating $f$ by a piecewise-linear function having two segments. By visual inspection of the behaviour of the value function $f(b)$ (in solid lines) near the origin in  Fig. \ref{fig:example}, we see that we can approximate $f(b)$ by
\begin{equation}\label{eq:approx}
    \tilde f(b)\coloneqq \begin{cases}
        2b,\quad &b> 0,\\
        -b,\quad &b\leq 0,
    \end{cases}
\end{equation}
based on the directional derivatives. This crude approximation  is shown in Fig. \ref{fig:example} as dashed lines. A conservative field for this function is given by
\begin{equation}
    D_{\tilde f}(b)=\begin{cases}
    2,\quad &b>0,\\
    [-1,2],\quad &b=0,\\
    -1,\quad &b<0.
    \end{cases}
\end{equation}
It is now clear that we can use the conservative fields of \cite{bolte2021conservative,bolte2021nonsmooth} to train over this approximation of the the piecewise-linear subadditive dual of the primal problem \eqref{eq:ralphsPrimal}.

More generally, we can introduce slope variables $s_1$ and $s_2$, as well as a breaking point $p$, to parametrize the two-segment approximation: 
\begin{equation}
    \tilde f(b)\coloneqq \begin{cases}
        &s_1b,\quad b>p,\\
        &s_2b,\quad b\leq p,
    \end{cases}
\end{equation}
and thus find the best two-segment approximation of the  piecewise-linear subadditive dual of the primal problem \eqref{eq:ralphsPrimal}, which in this case coincides with \eqref{eq:approx} above.  Next, we can increase the precision of the approximation by introducing more and more segments of this approximating function,
and optimize over the slopes and break points of the segments, and possibly also the number of segments.
Following \cite{bertsimas2017optimal}, we have studied \cite{nemecek2023piecewise} formulations based on the optimal regression trees for piecewise regression. 

\begin{figure}[!tb]
    \centering
    \includegraphics[scale=0.45]{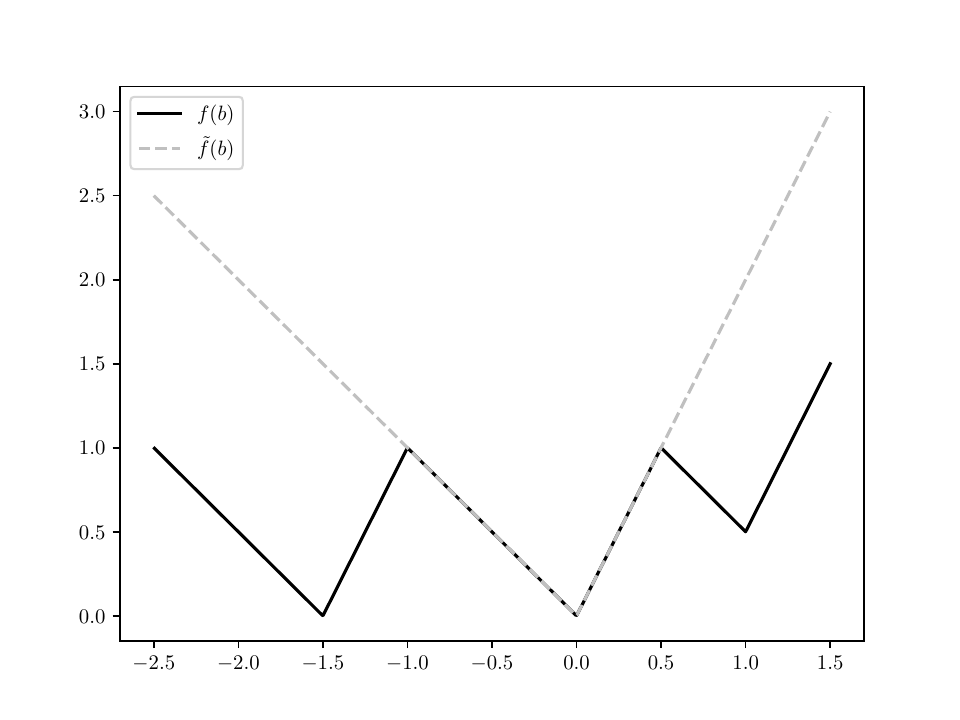}
    \caption{The value function, $f(b)$, of the example \eqref{eq:ralphsSubadditive} together with the simple approximation, $\tilde f(b)$, given by \eqref{eq:approx}.}
    \label{fig:example}
\end{figure}

\section{Conclusions and Limitations}
\label{sec:conclusions}

We have introduced a link between binarized neural networks, and more broadly, nice conic MIPs, and tame geometry. This makes it possible to reuse pre-existing theory and practical implementations of automatic differentiation.  
Breaking new ground, we leave many questions open. The foremost question is related to the efficiency of algorithms for constructing the subadditive dual. Although Guzelsoy and Ralphs \cite[Section 4, Constructing Dual Functions]{Guzelsoy2007DualityFM} survey seven very different algorithms, their computational complexity and relative merits are not well understood. For any of those, an efficient implementation (in the sense of output-sensitive algorithm) would provide a solid foundation for further empirical experiments.
Given the immense number of problems in symbolic AI, which can be cast as MIPs, and the excellent scalability of existing frameworks based on automatic differentiation, the importance of these questions cannot be understated. 

\paragraph{Acknowledgements}
 The research of Johannes Aspman has been supported by European Union’s Horizon Europe research and innovation programme under grant agreement No. GA 101070568 (Human-compatible AI with guarantees).
 This work received funding from the National Centre for Energy II (TN02000025).
 Jakub would like to acknowledge most pleasant discussions with Jerome Bolte in Prague, the Czech Republic, and with Ted Ralphs in Monterrey, Mexico.

\paragraph{Disclaimer} This paper was prepared for information purposes
and is not a product of HSBC Bank Plc. or its affiliates.
Neither HSBC Bank Plc. nor any of its affiliates make
any explicit or implied representation or warranty and
none of them accept any liability in connection with
this paper, including, but limited to, the completeness,
accuracy, reliability of information contained herein and
the potential legal, compliance, tax or accounting effects
thereof.

\clearpage
\bibliography{aaai24}

\end{document}